\newtheorem{theorem}{Theorem}
\newtheorem{corollary}{Corollary}
\newtheorem*{remark}{Remark}
\newtheorem*{note}{Note}
\let\old@ps@headings\ps@headings
\let\old@ps@IEEEtitlepagestyle\ps@IEEEtitlepagestyle
\def\psccfooter#1{%
 \def\ps@headings{%
 \old@ps@headings%
 \def\@oddfoot{\strut\hfill#1\hfill\strut}%
 \def\@evenfoot{\strut\hfill#1\hfill\strut}%
 }%
 \def\ps@IEEEtitlepagestyle{%
 \old@ps@IEEEtitlepagestyle%
 \def\@oddfoot{\strut\hfill#1\hfill\strut}%
 \def\@evenfoot{\strut\hfill#1\hfill\strut}%
 }%
 \ps@headings%
}
\begin{document}
%
% paper title
% Titles are generally capitalized except for words such as a, an, and, as,
% at, but, by, for, in, nor, of, on, or, the, to and up, which are usually
% not capitalized unless they are the first or last word of the title.
% Linebreaks \\ can be used within to get better formatting as desired.
% Do not put math or special symbols in the title.
\title{Data-Efficient Strategies for Probabilistic Voltage Envelopes under Network Contingencies}
%% To specify the authors when (number of affiliations <= 2)
\author{
\IEEEauthorblockN{Parikshit Pareek, Deepjyoti Deka and Sidhant Misra}}
% \IEEEauthorblockA{Theoretical Division,
% Los Alamos National Laboratory, NM, USA\\
% \{pareek,deepjyoti,sidhant\}@lanl.gov}
% }

% make the title area
\maketitle

\begin{abstract}
This work presents an efficient data-driven method to construct probabilistic voltage envelopes (PVE) using power flow learning in grids with network contingencies. First, a network-aware Gaussian process (GP) termed Vertex-Degree Kernel (VDK-GP), developed in prior work, is used to estimate voltage-power functions for a few network configurations. The paper introduces a novel multi-task vertex degree kernel (MT-VDK) that amalgamates the learned VDK-GPs to determine power flows for unseen networks, with a significant reduction in the computational complexity and hyperparameter requirements compared to alternate approaches. Simulations on the IEEE 30-Bus network demonstrate the retention and transfer of power flow knowledge in both N-1 and N-2 contingency scenarios. The MT-VDK-GP approach achieves over 50\% reduction in mean prediction error for novel N-1 contingency network configurations in low training data regimes (50-250 samples) over VDK-GP. Additionally, MT-VDK-GP outperforms a hyper-parameter based transfer learning approach in over 75\% of N-2 contingency network structures, even without historical N-2 outage data. The proposed method demonstrates the ability to achieve PVEs using sixteen times fewer power flow solutions compared to Monte-Carlo sampling-based methods. 
\end{abstract}

\begin{IEEEkeywords}
Power Flow, Probabilistic Voltage Envelopes, Gaussian Process, Transfer Learning.
\end{IEEEkeywords}

% Use this to place sponsorships

\thanksto{\noindent Authors are with Theoretical Division (T-5), Los Alamos National Laboratory, NM, USA.\{pareek,deepjyoti,sidhant\}@lanl.gov \\
The authors acknowledge the funding provided by LANL’s Directed Research and Development (LDRD) project: “High-Performance Artificial Intelligence" (20230771DI) and and The Department of Energy (DOE), USA under the Advanced grid Modeling (AGM) program. The research work conducted
at Los Alamos National Laboratory is done under the auspices of the National Nuclear Security Administration of the U.S. Department of Energy under Contract No. 89233218CNA000001}
 
\section{Introduction}
The steady-state of a power grid is determined by solving a set of nonlinear equations known as Alternating Current Power Flow (ACPF), which relates nodal voltages (states) to input nodal power injections. These equations lack closed-form expressions for nodal voltages and are typically solved iteratively using methods like the Newton-Raphson load flow (NRLF) \cite{molzahnsurvey}. Dealing with injection uncertainties complicates the problem, often leading to the use of the various model-based \cite{molzahnsurvey} and data-based \cite{jia2023tutorial,misra2018optimal,deka2017structure} approximations to solve the ACPF. Linear models, while interpretable, struggle to capture the full non-linearity of power flow, and lifted approaches tend to sacrifice interpretability and require extensive samples \cite{9537673}. 

Also, to assess the impact of network contingencies on system operation, probabilistic power flow (PPF) problems are solved with different topologies to aid in security constrained decision-making \cite{line2013}. Analytical methods such as Cumulants are used to assess random branch outage effect on node voltage magnitude \cite{singh2023}. Yet, the approach relies on assumptions on load uncertainty distributions and does not provide any guarantee on maximum/minimum voltage magnitude values. To overcome that, Monte-Carlo simulation (MCS)-based methods are proposed to work with arbitrary input uncertainties, which can provide statistical estimation guarantees as well \cite{zou2014}. The biggest limitation of these MCS based methods is the computational burden. Further, a significant limitation of data-driven machine learning models lies in their lack of topological awareness, resulting in limited adaptability to out of sample data \cite{9537673}, resulting from sparse topology changes from the base case. Practical power grids often experience contingencies or undergo regular maintenance, leading to occasional changes in topology \cite{chen2022review}. The number of potential power network configurations is combinatorial in nature, presenting substantial computational challenges in learning, particularly when coupled with load uncertainties \cite{Go}. 

To tackle this challenge and address the variability in topology, topology-aware data-driven methods have emerged as a promising solution. These methods either aim to learn a single model suitable for various network topologies \cite{Go,G1,G2} or acquire an average model that can be efficiently adapted to different topologies \cite{chen2022meta}. The incorporation of topology into these methods is primarily achieved by encoding it as input features \cite{G1,G2} or embedding it into a neural network (NN) structure \cite{owerko2020optimal}. In previous approaches focusing on topology encoding, features such as voltage differences between different topologies \cite{G2} and diagonal elements of the susceptance matrix \cite{G3} have been employed. While these methods can handle multiple topologies, their adaptability to untrained topologies is limited due to potential shortcomings in fully capturing topology properties during input vector construction. Another line of research integrates power grid topology within the forward-propagation process of Graph Convolutional Networks (GCNs), inherently equipped with topology features. These studies utilize constant graph convolution kernels derived from the Laplacian matrix \cite{G4} and the adjacency matrix of transmission lines \cite{donon2019graph,owerko2020optimal,Wang2020} to construct GCNs and approximate power flow mappings. 

In \cite{chen2022meta}, a meta-learning approach is presented, where an average model for optimal power flow over multiple topologies is learned and can be subsequently updated to achieve desired performance with new topology data. Despite enhancing GCN adaptability, these methods may not fully consider the physical relationships between nodes, limiting their applicability across various topological scenarios. Recently, \cite{gao2023physics} provided a critical overview of these methods and introduced a physics embedding approach using neighborhood aggregation in GCN. However, this GCN architecture requires a substantial amount of training data and has shown limited robustness against uncertain unseen topologies, along with acceptable error margins. Additionally, approaches aiming to create a single model for different network topologies implicitly assume that voltage is a continuous function of the network graph for a given load, that may not hold. 

In this paper, we attempt to develop a framework for obtaining probabilistic voltage envelopes (PVEs) via  efficient learning of voltage-power functions for power grid with topology changes. A PVE delineates the plausible range of a node voltage magnitude, with plausibility quantified in terms of probability. Fundamentally, it indicates the likelihood that the node voltage magnitude will fall within the PVE, given a specific probability, when the load originates from a predefined uncertainty set. This work focuses on reducing the training data requirement for learning power flow over a new topology by leveraging existing models learned for prior topologies. This is in line with operational realities where limited data is available for a newly realized network topology due to contingencies, while only a few prior network structures can be deeply analyzed. We build a set of transfer learning (TL) methods based on the recently proposed vertex degree kernel (VDK) \cite{pareek2023graph} and its ability to incorporate network structures into GP learning. A multi-task VDK (MT-VDK) is designed by combining VDKs from different existing models with a parameterized VDK for the new topology. Existing or learned VDKs have fixed hyperparameters, and a weight parameter is used over the sum of these VDKs. This means that the number of hyperparameters does not grow with the number of existing topologies considered. Furthermore, by separating the source and new model training, the proposed MT-VDK-GP has lower learning complexity than MT-GP \cite{tighineanu2022transfer}. Thus, the proposed method allows us to obtain PVEs with very few samples compared to ACPF sample-based envelop construction methods, while maintaining similar probabilistic bounds.

Main contributions of this paper can be summarized as: 

\begin{itemize}
 \item Developing a computationally efficient method to obtain probabilistic voltage envelope (PVE) for different network topologies and input distributions. The MT-VDK-GP is used for numerical evaluation of voltage envelopes with probabilistic guarantees. 
 \item Designing a multi-task vertex degree kernel (MT-VDK) to utilize available data/models in learning voltage-power function, in data-efficient manner, for new network structures. A simple hyperparameter transfer learning (HTL) mechanism has also been designed exploiting network structure-based VDK \cite{pareek2023graph}.
 \item Benchmarking the proposed transfer learning mechanisms for N-1 and N-2 contingencies with $\pm10\%$ uncertainty in all node injections, for IEEE 30-Bus system. In low data regime, proposed methods show more than two fold reduction in errors compared to cold-start VDK-GPs. 
\end{itemize}

In the subsequent sections of the paper, we delve into specific methodologies and strategies employed in our approach. The following section focuses on topology-aware power flow learning, where we describe how information about network topology is embedded in power flow learning. Section III explores transfer learning strategies applied to reduce the training data requirement, highlighting techniques for leveraging pre-trained models and knowledge transfer under network contingencies. In Section IV, we delve into probabilistic voltage envelopes (PVE), discussing their incorporation for uncertainty quantification in power flow predictions. We also prove the worst-case bound on PVEs obtained using the proposed GP-based approach. Following this, Section V presents our results, detailing the performance of the proposed methodologies. Finally, in Section VI, we conclude by summarizing our findings and discussing potential avenues for future research.

\section{Topology Aware Power Flow Learning}
This section begins with an overview of power flow in functional form, from GP learning perspective. Subsequently, we explore the intricacies of the topology-aware kernel design. The alternating current power flow (ACPF) model is used in polar coordinate form with node voltage referred to as $v_j=V_j\angle{\theta_j}$ ($V_j$ is magnitude, and $\theta_j$ is the phase angle). Hereafter, voltage is used to refer voltage magnitude. Injection refers to generation minus demand, while load is equal to negative injection. The injection vector is denoted as $s_j = [p_j;q_j]$. Here, $p_j$ ($q_j$) represents real (reactive) load/injection at the $j$-th node. The set of all nodes is denoted as $\mathcal{N}$ with $|\mathcal{N}|$ being the number of nodes. The set of branches for a given network structure or topology is denoted by $\mathcal{E}$. Total number of branches are denoted by $N$ with $N-1$ for single and $N-2$ for double branch outage contingencies. Standard ACPF equation for a given network structure represented by admittance matrix ($\rm Y$-Bus). Our goal is to represent ACPF as a map between a node voltage magnitude and injection vector
\cite{pareek2021framework,pareek2023graph}, as
\begin{align}\label{eq:v_gp}
 V(\mathbf{s}) = f(\mathbf{s}) + \varepsilon
\end{align}
where, $V(\mathbf{s})$ is node voltage magnitude measurement or NRLF solution at any target node of the network (to enhance brevity, the subscript '$j$' is omitted hereafter). In \eqref{eq:v_gp}, $f(\mathbf{s})$ is unknown and $\varepsilon$ is i.i.d Gaussian noise $\varepsilon \sim \mathcal{N}(0,\sigma^2_n)$. Note that ACPF is solved for a given Y-Bus i.e. for a given network topology, and thus \eqref{eq:v_gp} is valid for that network structure only. Now, we utilize GP to model a node voltage function, denoted as $f(\mathbf{s})$ and defined as a zero-mean Gaussian process, \cite{williams2006gaussian}:
\begin{align}\label{eq:gp}
 f(\mathbf{s}) \sim \mathcal{GP}\Big ( \mathbf{0}, K(S,S) \Big )
\end{align}
here, $K(\cdot,\cdot)$ represents the covariance, or kernel matrix, computed over the training samples, $K_{i,j}= k(\mathbf{s}^i,\mathbf{s}^j)$. The design matrix $S = [\mathbf{s}^1 \dots \mathbf{s}^i \dots \mathbf{s}^{N_s}]$ comprises $N_s$ i.i.d injection vector samples forming the training set $\{\mathbf{s}^i,V^i\}^{N_s}_{i=1}$. See \cite{williams2006gaussian,pareek2021framework,pareek2023graph} for more details. This kernel function operates on injection vectors and incorporates hyperparameter vector $\boldsymbol{\theta}$. These hyperparameters can be estimated (optimally) using maximum log-likelihood estimation (MLE) for exact inference \cite{williams2006gaussian}. Once learned, the GP provides predictions of the function's mean and variance, represented as:
\begin{align}
 \mathbb{E}[f(\mathbf{s})] & = \mu_f(\mathbf{s}) = \mathbf{k}^T \boldsymbol{\alpha} \label{eq:vmean} \\
 \mathbb{V}[f(\mathbf{s})] & = \sigma^2_f(\mathbf{s}) = k(\mathbf{s},\mathbf{s})-\mathbf{k}^T\rm A \mathbf{k} \label{eq:vsigma}
\end{align}
with $\rm A = [K(S,S)+\sigma^2_n I]^{-1}$ and $\boldsymbol{\alpha} =\rm{A} \mathbf{V}$ being constant after learning for a given hyperparameter vector $\boldsymbol \theta$. The variance \eqref{eq:vsigma} is a direct indicator of quality of learning by quantifying the confidence in predicted voltage for an injection. A GP model, provides a predicted point, denoted as $\mu_f(\mathbf{s})$ in \eqref{eq:vmean}, and also offers a measure of confidence through the predictive variance, $\sigma^2_f(\mathbf{s})$. Put simply, this confidence can be interpreted as indicating a 95\% likelihood that the actual voltage solution obtained by solving the power flow at a specific injection point will fall within the range of $\mu_f(\mathbf{s}) \pm 2\sigma_f(\mathbf{s})$. Therefore, a lower value of $\sigma^2_f(\mathbf{s})$ corresponds to a more reliable GP model.\cite{williams2006gaussian}.

Referring to \eqref{eq:gp}, it is clear that the learning process is directly influenced by the choice of the kernel function.
If the chosen Kernel is topology-agnostic and valid only for the topology of the training dataset, any network change or contingency would require relearning the entire model from scratch and require sufficient data from the new topology. The first step towards working with multiple network structures involves making the GP learning model aware of network topology. The following subsection introduces the recently developed vertex degree kernel (VDK) \cite{pareek2023graph} and elucidates methods for adapting VDK to various network structures.

\begin{figure}[t]
\centering
\resizebox{\columnwidth}{2.5cm}{
 \begin{tikzpicture}
 \Vertex[IdAsLabel,RGB,color={252,80,30},size = 0.5,x = -0.8]{1} 
 \Vertex[IdAsLabel,RGB,color={120,239,30},x=-0.8,y=1,size = 0.5]{3}
 \Vertex[IdAsLabel,RGB,color={120,239,30},x=-0.8,y=-1,size = 0.5]{2}
 \Vertex[IdAsLabel,x=0.2,RGB,color={120,239,246},size = 0.5]{12}
 \Vertex[IdAsLabel,x=0.8,y=1,RGB,color={120,239,246},size = 0.5]{5}
 \Vertex[x=0.2,y=-1.5,size = 0.5,Pseudo]{7}
 \Vertex[x=0.8,y=-1,size = 0.2,Pseudo]{13}
 \Text[x=-2.5,y=0]{\footnotesize $k_1([\textcolor{blue}{\mathbf{s}_1};\mathbf{s}_2;\textcolor{purple}{\mathbf{s}_3}],\cdot)$}
 \Text[x=-2.5,y=-1]{\footnotesize $k_2([\mathbf{s}_1;\textcolor{blue}{\mathbf{s}_2};\mathbf{s}_{12}],\cdot)$}
 \Text[x=-2.7,y=1]{\footnotesize $k_3([\textcolor{purple}{\mathbf{s}_1};\textcolor{blue}{\mathbf{s}_3};\mathbf{s}_{5};\mathbf{s}_{12}],\cdot)$}
 % Edges 
 \Edge(1)(2)\Edge(1)(3)\Edge(2)(12)
 \Edge(3)(12) \Edge(3)(5)
 \Edge[style={dashed}](12)(7) \Edge[style={dashed}](5)(13)
 % --------- Second Topology --------------
 \Vertex[RGB,color={252,80,30},size = 0.5,x=4.5,label=1]{1a} 
 \Vertex[RGB,color={120,239,30},x=4.5,y=1,size = 0.5,label=3]{3a}
 \Vertex[label=2,RGB,color={120,239,30},x=4.5,y=-1,size = 0.5]{2a}
 \Vertex[label=12,x=5.4,RGB,color={120,239,246},size = 0.5]{12a}
 \Vertex[label=5,x=6.3,y=1,RGB,color={120,239,246},size = 0.5]{5a}
 \Vertex[x=5.4,y=-1.5,size = 0.5,Pseudo]{7a}
 \Vertex[x=6.3,y=-1,size = 0.2,Pseudo]{13a}
 % Edges 
 \Edge(1a)(2a) 
 %\Edge(1a)(3a)
 \Edge(2a)(12a)
 \Edge(3a)(12a) \Edge(3a)(5a)
 \Edge[style={dashed}](12a)(7a) \Edge[style={dashed}](5a)(13a)
 % Text
 \Text[x=3.3,y=0]{\footnotesize $k_1([\textcolor{blue}{\mathbf{s}_1};\mathbf{s}_2],\cdot)$}
 \Text[x=3.1,y=-1]{\footnotesize $k_2([\mathbf{s}_1;\textcolor{blue}{\mathbf{s}_2};\mathbf{s}_{12}],\cdot)$}
 \Text[x=3.1,y=1]{\footnotesize $k_3([\textcolor{blue}{\mathbf{s}_3};\mathbf{s}_{5};\mathbf{s}_{12}],\cdot)$}
\end{tikzpicture}}
\vspace{-1.5em}
 \caption{Different sub-kernels of VDK for different topologies of a part of 118-Bus system. Right hand side network shows the changes in kernels due to line 1-3 outage under N-1 contingency. Purple colored injections on the left are absent from sub-kernels on right hand side network.}
 \vspace{-1.5em}
 \label{fig:VDK}
\end{figure}
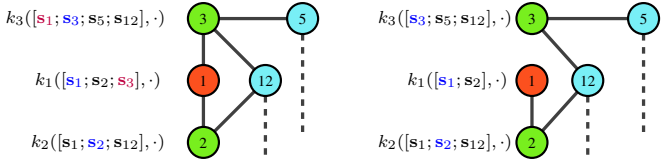

\subsection{Network Structured Kernel Design}
Recent work \cite{pareek2023graph} exploits the network graph to design an additive kernel by utilizing nodal sub-kernels that focus on neighborhood injections. Intuitively, the VDK assumes that injections from nodes which are far away from each other do not have a correlated effect on any node voltage in the network \cite{pareek2023graph}. 
For a given node $n$, neighborhood injections come from the nodes directly connected to $n$. The set of variables for neighborhood injections is defined as $\{\mathbf{x}_n = \{\mathbf{s}_m | (n, m) \in \mathcal{E}\}\}$. A VDK is
\begin{align}\label{eq:VDK}
 k_{\mathcal{E}}(\mathbf{s}^i,\mathbf{s}^j) = \sum^{|\mathcal{N}|}_{n =1} k_n(\mathbf{x}^i_n,\mathbf{x}^j_n) 
\end{align}
where, $k_n(\mathbf{x}^i_n,\mathbf{x}^j_n)$ is a neighborhood kernel working over $\mathbf{x}_n \subset \mathbf{s}$. Using the power flow function's smoothness property, we select \textit{square exponential kernel} with hyperparameters $\boldsymbol{\theta}_n = \{\ell_n,\tau_n\}$ \cite{williams2006gaussian,pareek2021framework}. 
The VDK in \eqref{eq:VDK} incorporates the network topology by operating over the edge set $\mathcal{E}$. Consequently, for each distinct network topology or graph, a unique VDK construction $k_{\mathcal{E}}(\cdot,\cdot)$ exists, derived from the neighborhood connections in the network graph. It's important to note that both real and reactive power injections are accounted for at each node in both $\mathbf{s}$ and $\mathbf{x}_n$'s. Fig. \ref{fig:VDK}, for instance, illustrates the construction of sub-kernels for VDK for two different network configurations. Likewise, $N-2$ or $N-3$ contingencies can be modeled in the VDK design. Moreover, controlled alterations in the power network resulting from reconfiguration can also be incorporated into the VDK design.

Next section presents a transfer learning framework which exploit the VDK's ability to incorporate different network structures, appearing due to branch outages or network reconfiguration by operator.

\section{Proposed Transfer Learning Strategies}\label{sec:T1}
As discussed in the introduction, learning power flow with network contingencies is computationally hard, due to the  large number of possible network graph configurations and the combinatorial nature of contingencies. For a 30-Bus network having a total of 41 branches, there are 41 and 820 possible network structures for $N-1$ and $N-2$ contingencies, respectively. These large numbers of graph possibilities, coupled with injection uncertainties, makes it hard to accurately model the voltage-power relationships under network contingency conditions.

This section presents two strategies for efficiently learning power flow with a new network structure by leveraging existing knowledge about voltage-power relationships. We use the term `source' to refer to the dataset available with the operator and `target' to indicate training a model for a new network structure, which is not present in the source. This situation aligns with practical scenarios where operators have power flow data for only a few contingencies, often referred to as critical contingencies \cite{du2019achieving}. Our idea involves utilizing the source dataset to efficiently learn the voltage-power function $f(\mathbf{s})$ for a new network structure. First, we suggest using the hyperparameters of the source power flow models\footnote{Here, `source power flow models' refer to having comprehensive data needed to calculate the predictive mean \eqref{eq:vmean} and variance \eqref{eq:vsigma} for various network structures when dealing with load uncertainties. This data encompasses the VDK $k_{\mathcal{E}}$ and corresponding optimal hyperparameters $\boldsymbol{\theta}_{\mathcal{E}}$, acquired during the training of the VDK-GP model for each source network structure $\mathcal{E}$.} to hot start the learning process for target function hyperparameter learning. Then a subsection presents strategy to use source VDKs along with hyperparameter hot starting to learn target voltage-power function in data efficient manner.

Employing a hyperparameter hot-starting approach for GP learning, is well-founded as GP regression performance is kernel function dependent, which is parameterized using hyperparameters $\boldsymbol{\theta}$. 
A VDK, denoted as $k_{\mathcal{E}}(\cdot,\cdot)$, captures the underlying assumptions about the voltage function's behavior and its dependence on power injections for a network configuration $\mathcal{E}$. 
Further,as MLE aims to find optimal hyperparameters for model fitting, 
it is apparent that transferring a hyperparameter vector will imply having a starting point with higher likelihood during optimization. Moreover, from the structure of VDK \eqref{eq:VDK}, the total number of hyperparameters and their positioning in the kernel structure is network topology independent. This is because the set of graph branches/lines, denoted as $\mathcal{E}$, only affects kernel inputs in the form of $\mathbf{x}_n$ and not the structure of hyperparameter vector. We 
initialize the hyperparameters of the target function VDK with the average of source VDK hyperparameters as
\begin{align}\label{eq:theta}
 \boldsymbol{\theta}^o = \frac{\boldsymbol{\theta}^\star_1 + \dots + \boldsymbol{\theta}^\star_{M_s}}{M_s}
\end{align}
here, $M_s$ represents the number of source models available with different network configurations. $\boldsymbol{\theta}^\star_{m}$ denotes the optimal VDK hyperparameters for the $m$-th network configuration, while $\boldsymbol{\theta}^o$ represents the initial hyperparameters for the target VDK. This HTL strategy is designed to provide advantages in terms of faster convergence in both sample and iteration senses, relying on knowledge transfer related to power flow physics. This transfer predominantly involves the 'relative contribution' of different node injections to node voltage.

The proposed HTL has limitations in transferring power flow knowledge because the hyperparameter vector does not directly capture topological information. In the next section, we introduce Multi-Task VDK-GP, which leverages both source kernels and HTL to enhance performance when dealing with unsampled network topologies.

\subsection{Multi-Task VDK-GP (MT-VDK-GP)}\label{sec:T2}
The concept behind Multi-Task VDK-GP involves designing a \textit{combined} kernel, a strategy also explored with various kernel structures in \cite{alvarez2012kernels,tighineanu2022transfer}. In standard multi-task GPs \cite{tighineanu2022transfer}, \textit{coregionalization matrices} are employed to establish relationships within and between different data sets (source and target), where these data sets represent various contingency cases in our context. However, it's essential to note that the number of \textit{coregionalization} hyperparameters grows cubically with the number of sources used for transfer learning, leading to a significant increase in the total number of hyperparameters (cubic matrix+linear noise+kernel hyperparameters) \cite{tighineanu2022transfer}. This growth is a limitation for efficient hyperparameter learning. 

We design the MT-VDK, where the total number of hyperparameters for target learning is $2|\mathcal{N}|+2$. Here $2|\mathcal{N}|$ corresponds to VDK hyperparameters from \eqref{eq:VDK} and square exponential kernel for an $|\mathcal{N}|$-Bus network. The number of hyperparameters grows linearly with the network's size, and reduction strategies, as detailed in \cite{pareek2023graph}, can be applied to further decrease the hyperparameter vector's size. To reduce training complexity, we propose designing the MT-VDK-GP in alignment with standard system operation practices. Operators can analyze a few source contingencies in detail using a large number of samples to learn the voltage-power functions for these scenarios. However, the target function learning process must be efficient in terms of sample requirements and complexity. Hence, we introduce MT-VDK-GP, enabling distinct learning for both source and target functions. The proposed MT-VDK is given as
\begin{align}\label{eq:mtl}
 k_{\mathcal{E}_T}(\mathbf{s}^i,\mathbf{s}^j) = \sum^{|\mathcal{N}|}_{n =1} k_n(\mathbf{x}^i_n,\mathbf{x}^j_n) + \omega \sum^{M_s}_{m=1} k_{\mathcal{E}_m}(\mathbf{s}^i,\mathbf{s}^j)
\end{align}
here, neighborhood kernel $k_n(\cdot,\cdot)$ has structure dependent on target network structure $\mathcal{E}_T$ ($\{\mathbf{x}_n = \{\mathbf{s}_m|(n,m) \in \mathcal{E}\}$) while $k_{\mathcal{E}_m}$ for $m=1 \dots M_s$ are source kernels having constant structure (based on $\mathcal{E}_m$) and constant (optimal) hyperparameters $\boldsymbol{\theta}^\star_m$ based on the source learning stage. The constant source kernel means that hyperparameters associated with these kernels does not change during the target function learning process. The target hyperparameters vector consist of $\tau_n$'s, $\ell_n$'s of neighborhood kernels $k_n(\cdot,\cdot)$ in \eqref{eq:mtl}, noise variance $\sigma_n$ from A in \eqref{eq:vmean} and source kernel aggregated weight $\omega$ taking total hyperparameters to $2|\mathcal{N}|+2$.

\begin{note}
    Multi-task GP model training complexity \cite{tighineanu2022transfer} scales as $\mathcal{O}((N_t+N_s)^3)$, where $N_t$ ($N_s$) represents the number of training samples used for target (source) function learning. As a result of using \eqref{eq:mtl} in MT-VDK-GP, the training complexity for learning the target function is $\mathcal{O}(N_t^3+N_s^3)$. It's important to note that this complexity is significantly lower than that of multi-task GP learning, as the ``cube of the sum" grows much faster than the ``sum of cubes."
\end{note}

Below, in a remark, we describe a variation of the MT-VDK \eqref{eq:mtl} that offers higher expressiveness by increasing the number of hyperparameters. The next section presents an application of the proposed MT-VDK-GP for developing Probabilistic Voltage Envelopes (PVE) with varying topologies.

\begin{remark}
 In the multi-task VDK-GP scheme, source kernels can also be included with individual tuneable weight parameter $\omega_m$ i.e. $$k_{\mathcal{E}_T}(\mathbf{s}^i,\mathbf{s}^j) = \sum^{|\mathcal{N}|}_{n =1} k_n(\mathbf{x}^i_n,\mathbf{x}^j_n) + \sum^{M_s}_{m=1} \omega_mk_{\mathcal{E}_m}(\mathbf{s}^i,\mathbf{s}^j)$$ This will increase number of hyperparameters for target function learning, and will provide more flexibility for model learning. 
\end{remark}

\section{Probabilistic Voltage Envelopes (PVE)}
We consider the problem of constructing PVEs for a power network for all $N-k$ contingencies and for a given load distribution. We define PVE as the maximum (and minimum) value of a node voltage for various network topologies and a load distribution with a given probability \cite{liu2022using}.  The proposed MT-VDK-GP models can alleviate this computational burden through: i) data-efficient learning of voltage-power functions and ii) faster evaluation of voltage for a large number of power injections to compute PVEs. These envelopes can then be directly used to assess critical contingencies \cite{486117} and solve stochastic optimization efficiently \cite{xu2012optimization}.

Further, a straightforward method to construct PVEs is to solve a large number of power flows for each network topology and then develop probabilistic guarantees based on the number of power flows solved. One such statistical guarantee comes from the worst-case performance theorem \cite{alamo2010sample,tempo1996bounds}, in power system's context:
\begin{theorem}\label{thm} 
For i.i.d. samples $\mathbf{s}=\{\mathbf{s}^1 \dots \mathbf{s}^T\}$ drawn from the same net-load sub-space as per a probability distribution $\mathbb{P}_s$, let the estimated maximum voltage be defined as
\begin{align*}
 \widehat{\beta} = \max_{i= 1 \dots T} f(\mathbf{s}^i).\,\, \text{Then if}\,\,T \geq \frac{\ln{{1}/{\delta}}}{\ln{\frac{1}{1-{\epsilon}/{2}}}},
\end{align*}
 ACPF solutions (voltage samples) are used, we have
 $$ \mathbb{P}_T\big \{\mathbb{P}_s \big \{ f(\mathbf{s}) < \widehat\beta \big \} \geq 1-{\epsilon}\big/{2} \big \} \geq 1-\delta,$$
with $\delta$ being confidence and $\epsilon$ being accepted violation probability.
\end{theorem}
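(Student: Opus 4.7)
The plan is to carry out a standard quantile-exceedance argument against the sample maximum. First I would introduce the induced scalar random variable $V := f(\mathbf{s})$ for $\mathbf{s} \sim \mathbb{P}_s$, and define the critical level
\[
v^\star := \inf\big\{v \in \mathbb{R} : \mathbb{P}_s\{f(\mathbf{s}) \geq v\} \leq \epsilon/2\big\}.
\]
Monotonicity and left-continuity of the survival function yield $\mathbb{P}_s\{f(\mathbf{s}) \geq v\} \leq \epsilon/2$ for every $v > v^\star$, and hence $\mathbb{P}_s\{f(\mathbf{s}) < v^\star\} \leq 1 - \epsilon/2$. Consequently, any realization with $\widehat{\beta} > v^\star$ automatically satisfies the desired conclusion $\mathbb{P}_s\{f(\mathbf{s}) < \widehat{\beta}\} \geq 1 - \epsilon/2$, so it suffices to upper-bound the bad event $E := \{\widehat{\beta} < v^\star\}$.

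Second, I would exploit the sample-max structure: $E$ occurs if and only if $f(\mathbf{s}^i) < v^\star$ for every $i$. Using the i.i.d.\ assumption and the quantile bound above,
\[
\mathbb{P}_T[E] = \prod_{i=1}^T \mathbb{P}_s\{f(\mathbf{s}^i) < v^\star\} \leq (1 - \epsilon/2)^T.
\]
Requiring the right-hand side to be at most $\delta$ and taking logarithms produces exactly the stated sample budget $T \geq \ln(1/\delta)/\ln\bigl(1/(1-\epsilon/2)\bigr)$, so that $\mathbb{P}_T[E] \leq \delta$ and the desired probabilistic envelope statement follows.

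The main obstacle I anticipate is the careful treatment of strict versus non-strict inequalities at $v^\star$, which is nontrivial only when the distribution of $V$ has a point mass at $v^\star$. For the power-flow setting of interest---continuous load uncertainties $\mathbb{P}_s$ together with the continuity of the voltage-power map $f$---the induced $V$ has a continuous CDF, the boundary event $\{\widehat{\beta} = v^\star\}$ has measure zero, and the strict-inequality argument above applies directly. For atomic distributions, a mild modification of the quantile (using $\sup$ over strict inequalities) preserves the sample complexity, and this is the only technical subtlety. A second minor point is that $v^\star$ is a deterministic functional of $\mathbb{P}_s$ alone and does not depend on the realized samples, which makes the product-measure factorization immediate.
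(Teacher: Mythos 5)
Your argument is correct and is exactly the standard quantile-exceedance proof of the worst-case performance bound of Tempo et al.\ that the paper simply cites without reproducing; the i.i.d.\ factorization $\mathbb{P}_T[E] \leq (1-\epsilon/2)^T \leq \delta$ recovers the stated sample budget, and your handling of the boundary event $\{\widehat{\beta}=v^\star\}$ (negligible for continuous load distributions, requiring only a sup-based quantile otherwise) disposes of the one genuine subtlety. Nothing further is needed.
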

\begin{proof}
The proof follows directly from the worst-case performance analysis result from \cite{tempo1996bounds,alamo2010sample}. 
\end{proof}

By constructing similar result for lower voltage limits, according to Theorem \ref{thm}, $T$ i.i.d. samples can be used to obtain PVE, for a fixed network topology, with $\epsilon$ accepted violation probability and $\delta$ confidence. As discussed in the introduction, due to the large number of possible contingencies and load scenarios involved, computing these PVEs using power flow is highly computationally burdensome. With this approach, a total  of ${TN!}/{k!(N-k)!}$ power flow solutions will be required  for all $N-k$ contingency situations where $N$ is number of branches. For example, if we consider $k =2$ ($N-2$ situation), for a 30-Bus system having 41 branches \cite{pglib}, total number of power flow solutions required will be $820T$. Now $T = 1000$ is sufficient to obtain PVEs with $\delta = 10^{-4}$ confidence and $2\%$ acceptable value of violation $\varepsilon$. This implies that to obtain the PVEs for $N-2$ contingencies for 30-Bus system total $82 \times 10^{4}$ power flow samples are required, which posses significant computational burden.

We propose to use MT-VDK-GP to reduce the computational burden by training power flow models for different network topologies and then using those trained models for evaluation of power flow. Moreover, to accommodate approximation error, we use predictive variance and $\beta$ estimation in Theorem \ref{thm} is performed as
\begin{align}\label{eq:beta}
 \beta = \max_{i= 1 \dots T} \, \mu(\mathbf{s}^i) + \kappa\sigma_f(\mathbf{s}^i)
\end{align}

\begin{figure*}[t]
 \centering
 \includegraphics[width=\textwidth]{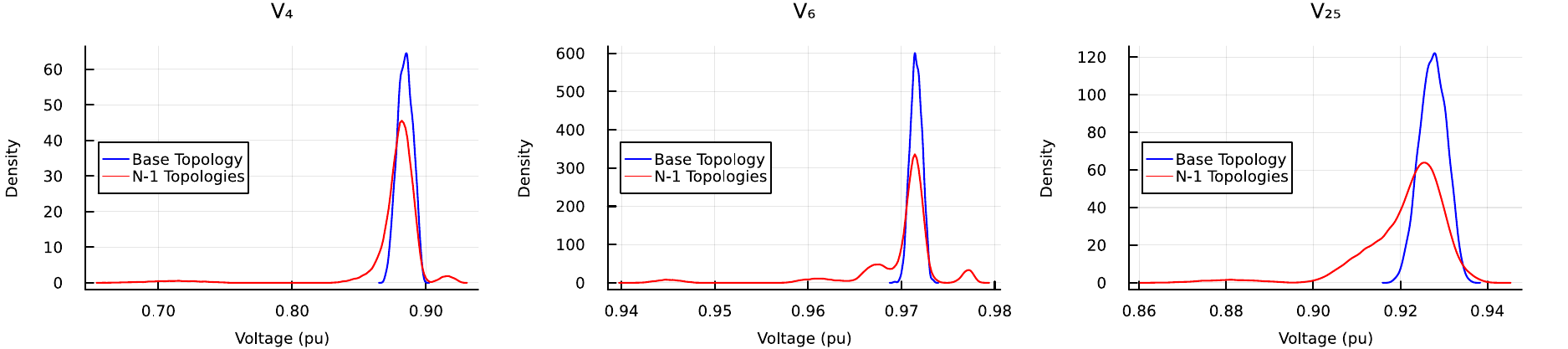}
 \vspace{-1.6em}
 \caption{Different voltage distributions across all total 38 different N-1 topologies for IEEE 30-Bus system. Total 2000 power flow solutions, at different nodal load vectors within $\pm10\%$ load hypercube, are considered at each topology as well as with base case (complete 41-line structure).}
 \label{fig:voltages}
 % \vspace{-0.5em}
\end{figure*}

Now considering that GP has been able to model the power flow correctly, $\kappa$ will determine the confidence level adjustment required in Theorem \ref{thm}. Below we present a corollary, using tail probability or quantile $\gamma(\kappa)$, to adjust it as 
\begin{corollary}\label{cor}
If estimation of maximum voltage is done using $\beta$ \eqref{eq:beta} instead of $\widehat\beta$ from Theorem \ref{thm}, then
$$ \mathbb{P}_T\Big \{\mathbb{P}_s \big \{ f(\mathbf{s}) < \beta \big \} \geq 1-\dfrac{\epsilon}{2} \Big \} \geq (1-\delta)(1-\gamma(\kappa))^T$$
\end{corollary}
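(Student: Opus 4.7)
The plan is to derive the corollary from Theorem~\ref{thm} by replacing $\widehat\beta$ with the GP-based upper confidence estimate $\beta$ and paying a probabilistic price for the approximation error at the $T$ sampled injection vectors. First I would fix the $T$ i.i.d.\ samples $\mathbf{s}^1,\dots,\mathbf{s}^T$ drawn according to $\mathbb{P}_s$ and observe the deterministic implication: if $\mu(\mathbf{s}^i)+\kappa\sigma_f(\mathbf{s}^i)\geq f(\mathbf{s}^i)$ holds for every $i=1,\dots,T$, then taking the max over $i$ gives $\beta\geq \widehat\beta$. Consequently the event $\{f(\mathbf{s})<\widehat\beta\}$ is a subset of $\{f(\mathbf{s})<\beta\}$, and the tail guarantee of Theorem~\ref{thm} transfers to $\beta$ on the intersection of the Theorem~\ref{thm} sample-coverage event and the ``GP upper bound is valid at every training sample'' event.

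Second, I would quantify the probability of the latter event using the GP posterior. Under the GP model, marginally $f(\mathbf{s}^i)\sim\mathcal{N}(\mu(\mathbf{s}^i),\sigma_f^2(\mathbf{s}^i))$, so
\[
\Pr\bigl\{f(\mathbf{s}^i)\leq \mu(\mathbf{s}^i)+\kappa\sigma_f(\mathbf{s}^i)\bigr\}=\Phi(\kappa)=1-\gamma(\kappa),
\]
where $\gamma(\kappa)$ is the upper-tail probability of the standard normal, matching the paper's notation. Treating the $T$ marginal bound-violation events as (approximately) independent, or invoking Slepian's inequality to lower-bound the joint probability for non-negatively correlated Gaussian coordinates (as produced by the squared-exponential sub-kernels in the VDK), gives
\[
\Pr\bigl\{\beta\geq \widehat\beta\bigr\}\;\geq\;(1-\gamma(\kappa))^T.
\]

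Third, I would assemble the final bound. Let $A$ denote the Theorem~\ref{thm} event $\{\mathbb{P}_s\{f(\mathbf{s})<\widehat\beta\}\geq 1-\epsilon/2\}$, which has $\mathbb{P}_T$-probability at least $1-\delta$, and let $B$ denote the GP-validity event $\{\beta\geq\widehat\beta\}$, which has probability at least $(1-\gamma(\kappa))^T$. Since $A$ is an event about the empirical coverage of $\widehat\beta$ under $\mathbb{P}_s$ while $B$ is an event about GP posterior deviations, treating them as independent yields $\mathbb{P}_T(A\cap B)\geq (1-\delta)(1-\gamma(\kappa))^T$. On $A\cap B$, $\mathbb{P}_s\{f(\mathbf{s})<\beta\}\geq \mathbb{P}_s\{f(\mathbf{s})<\widehat\beta\}\geq 1-\epsilon/2$, which is exactly the claimed bound.

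The main obstacle I expect is the $(1-\gamma(\kappa))^T$ factor: the posterior values $f(\mathbf{s}^1),\dots,f(\mathbf{s}^T)$ are jointly Gaussian and generally correlated, so the product form is not immediate from the marginals. I would address this by appealing to Slepian's inequality for the non-negatively correlated Gaussian case induced by the VDK kernel, or equivalently by adopting an independence assumption across distinct injection samples, which is the hypothesis implicit in the stated bound and is the usual convention in scenario-based certification results underlying Theorem~\ref{thm}.
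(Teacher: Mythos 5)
Your proposal follows essentially the same route as the paper's proof: the deterministic implication that $\beta\geq\widehat\beta$ whenever the GP upper bound $\mu(\mathbf{s}^i)+\kappa\sigma_f(\mathbf{s}^i)$ covers the true voltage at all $T$ samples, the per-sample Gaussian tail probability $1-\gamma(\kappa)$, the product $(1-\gamma(\kappa))^T$ over the i.i.d.\ samples, and the final multiplication with the $1-\delta$ confidence from Theorem~\ref{thm}. You are in fact somewhat more careful than the paper, which simply asserts the product form from the i.i.d.\ injections without addressing the correlation of the GP posterior across samples — the gap you flag and propose to patch via Slepian's inequality or an explicit independence assumption.
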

\begin{proof}
The violation probability $$\mathbb{P}_s \big \{ f(\mathbf{s}) < \widehat\beta \big \} \geq 1-{\epsilon}/{2}$$ in Theorem \ref{thm} is true with respect to $\beta$ if $\widehat\beta \leq \beta$. Now, $\widehat\beta \leq \beta$ if true voltage solution $V(\mathbf{s}^i)$ is be bounded by $\mu(\mathbf{s}^i) + \kappa\sigma_f(\mathbf{s}^i)$ for every injection sample $\mathbf{s}^i$ when $i=1\dots T$. 
Further, assuming that true power flow function follows Gaussian distribution, probability of $V(\mathbf{s}^i) \leq \mu(\mathbf{s}^i) + \kappa\sigma_f(\mathbf{s}^i) $ is $1-\gamma(\kappa)$ for an injection vector $\mathbf{s}^i$. Here, $\gamma(\kappa)$ is tail probability from normal distribution  With i.i.d injection samples, the probability that the above inequality holds for samples all $i=1\dots T$ can be given as $(1-\gamma(\kappa))^T$. Therefore, confidence in $$\mathbb{P}_s \big \{ f(\mathbf{s}) < \widehat\beta \big \} \geq 1-{\epsilon}/{2}$$ using GP-based $\beta$ is $(1-\delta)(1-\gamma(\kappa))^T$.
\end{proof}

Now, using \eqref{eq:beta} and Theorem \ref{thm} (as Corollary \ref{cor}), the number of power flow solutions required will will be ${N_sN!}/{k!(N-k)!}$ where $N_s << T$ is the number of power flow samples required to train the models. Evaluation of these models $T$ times requires negligible additional time since they involve direct function evaluation. Further, as GP model evaluation does not depend on distribution information, a single model can be used to evaluate PVEs for different injection distributions\cite{pareek2021framework}. This will provide additional computational benefits as sampling methods (like Monte-Carlo Simulations) will require again to solve at least $T$ power flows for each possible injection distribution. See \cite{pareek2021framework,pareek2020gaussian}
for more on advantages of non-parametric nature of GP-based power flow and optimal power flow models. Note that this sample complexity benefit comes at the cost of confidence reduction by a factor of $(1-\gamma(\kappa))^T$. However, the Gaussian distributions have short tail and $\gamma(\kappa)$ decrease very fast with increase in $\kappa$.  As an example for $\kappa \geq 3.5 $ and $T = 1000 ...$ we get $(1-\gamma(\kappa))^T = 0.9157 $.

\begin{figure*}[t]
 \centering
 \includegraphics[width=\textwidth]{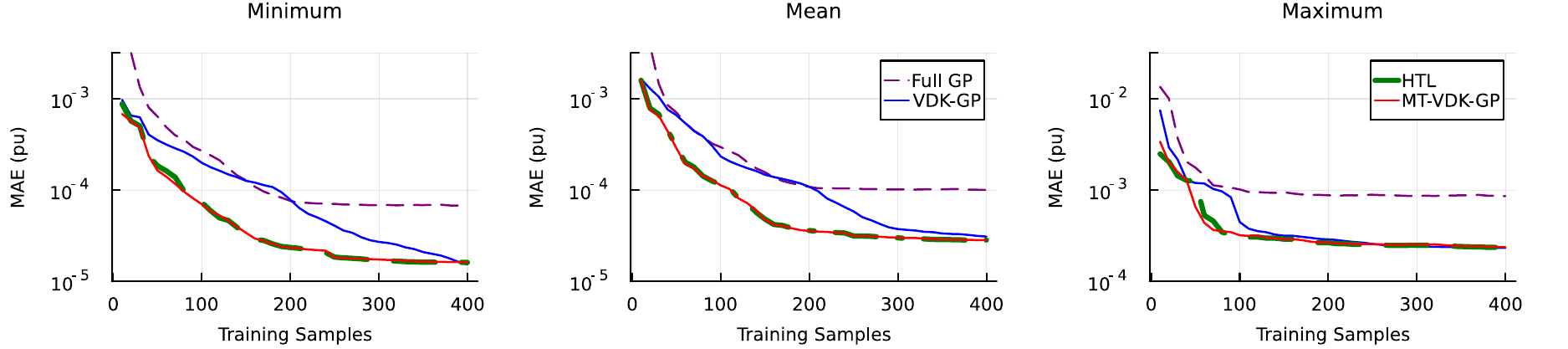}
 \vspace{-1em}
 \caption{Minimum, mean and maximum error variations with number of training samples for $V_4$ in 30-Bus system. Here, Minimum, mean and maximum is calculated across 38 different network topologies obtained after a single line outage at a time (N-1 contingency). Number of training iterations are same as training samples and 500 data points are used for testing with each network topology in out-of-sample manner. All y-axis scales are in $\log$ and note that right-most sub-figure (maximum) as different y-axis range.}
 \label{fig:m3}
 \vspace{-0.8em}
\end{figure*}

\section{Results and Discussion}
In this section we use IEEE 30-Bus system \cite{pglib} to perform all the training and testings. The 30-Bus system used has total 41 lines and out of which only 38 N-1 contingency scenarios are base case feasible. We restrict this section to analyze these 38 different topologies for N-1 contingencies while 356 different network topologies (out of total possible 820) are used for N-2 contingency training-testing. For load uncertainty, a $\pm10\%$ hypercube is considered. This implies that real and reactive power injection at each node lies within $\pm10\%$ of its base-case value \cite{pareek2021framework,pareek2023graph}. All testing is done using out-of-sample analysis for power injection inputs and number of training samples, testing samples and iterations for MLE optimization are given with respective results. All models are implemented in \texttt{Julia} and ACPF is solved using \texttt{PowerModels.jl} for sample generation. We use mean absolute error (MAE) to evaluate performance as. 
\begin{align}\label{eq:errors}
 \text{MAE} & = \frac{\sum_{i=1}^{M} \Big | \mu(\mathbf{s}^i) - \widehat{V}(\mathbf{s}^i) \Big |}{M}
\end{align}
where, $\mu(\mathbf{s}^i)$ is mean voltage prediction of that particular node voltage \eqref{eq:vmean}, $\widehat{V}(\mathbf{s}^i)$ represents the true ACPF solution values and $M$ is number of testing samples. Fig. \ref{fig:voltages} shows voltage sample densities for 38 different N-1 topologies and base network structure when 2000 input injections are used within $\pm 10\%$ (uniformly distributed). It is evident that contingencies lead to varying voltage spreads among different node voltages relative to the base network case. In next two subsections, we use these voltages to benchmark proposed methods against baseline approaches. For baseline performance, we use two cold starting GP approaches, i) \textbf{Full GP} to represent standard GP learning using square exponential kernel with both hyperparameters as $\exp\{1\}$ and $\sigma_n = \exp\{-5\}$ \cite{pareek2021framework}, ii) \textbf{VDK-GP} for GP using VDK without any transfer learning and initial hyperparameters for all sub-kernels is same i.e. $\tau_n = \ell_n = \exp\{1\}$ and $\sigma_n = \exp\{-5\}$. The proposed methods are referred as \textbf{HTL} and \textbf{MT-VDK-GP} as discussed in Section \ref{sec:T1} and \ref{sec:T2} respectively. Source sets contains the base topology power flow dataset and the same for a set of chosen contingency topologies as mentioned in the respective figure captions. 

\begin{figure}[t]
 \centering
 \includegraphics[width=0.9\columnwidth]{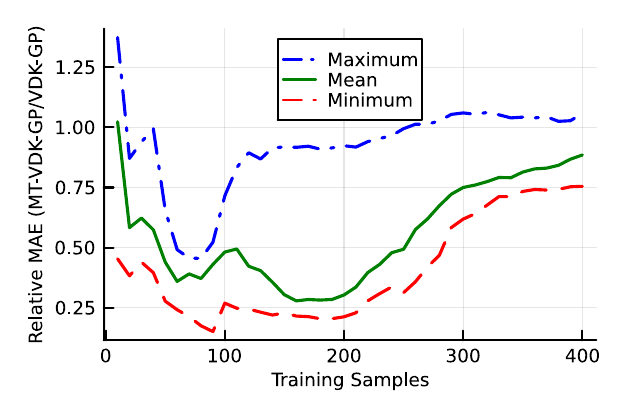}
 \vspace{-1.0em}
 \caption{Relative MAE variation with respect to training samples-- defined as the ratio of MAE obtained using MT-VDK-GP to VDK-GP. A value of 1.0 on the y-axis indicates similar performance by both methods. The training and testing conditions are the same as in Fig. \ref{fig:voltages}. Valleys in the error curves reflect the superior performance of the proposed MT-VDK-GP at low data regimes.}
 \label{fig:ratio}
 \vspace{-0.5em}
\end{figure}

\begin{figure*}[t]
 \centering
 \includegraphics[width=\textwidth]{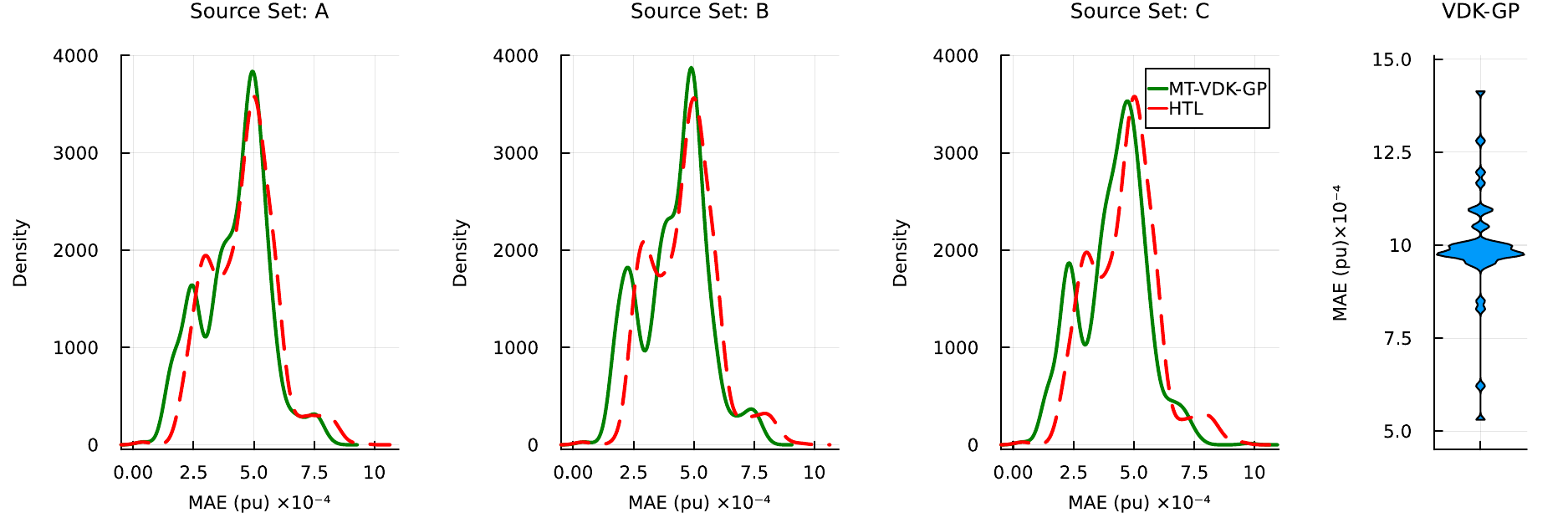}
 \vspace{-1em}
 \caption{Mean absolute error (MAE) densities for 24 node voltages for 38 different network topologies (N-1 contingencies). Different source sets are used for transfer learning. Indices of branches removed for source set $\rm A$ are 1,5 and 10; for $\rm B$ are 1,5,8 and 10 while in source set $\rm C$ branch 12, 15, 18, 22, 35 are removed to construct source dataset as discussed in Section \ref{sec:T2}. For each network topology and each node, 60 training samples are used with 50 training iterations while 1000 out-of-sample data points are used for testing. Right most sub-figure shows violin curve of MAE values when training is done using VDK-GP without any transfer learning for all nodes with all network topologies.}
 \label{fig:allnodeN1}
 \vspace{-0.5em}
\end{figure*}

\subsection{N-1 Benchmarking}
To gauge the performance of the proposed transfer learning (TL) methods comprehensively, we start this sub-section by describing benchmarking experiments conducted on $V_4$ learning, as it has the maximum variation among all node voltages in the 30-Bus system, across different N-1 network topologies.
Fig. \ref{fig:m3} shows the comparative performance in terms of MAE (pu) for four different GP learning methods. Starting from the left, the sub-figures display the minimum, mean, and maximum MAE performance across 38 distinct contingencies. To clarify, the maximum curves represent the least favorable performance observed among the 38 N-1 contingency power flow models.The first observation is that Full GP is the worst-performing approach, which is in line with findings presented in \cite{pareek2023graph}. Moreover, the rate of decrease in MAE with sample increase is higher using both TL methods compared to the VDK-GP baseline performance. This results in a faster convergence of the error curve for both HTL and MT-VDK-GP compared to VDK-GP.
Hence, Fig. \ref{fig:m3} aids in drawing the conclusion that source models do contain information on power flow physics, which can be beneficial in decreasing the MAE at a higher rate than training fresh and individual power flow models for each new network topology that has not been previously encountered.

Fig. \ref{fig:ratio} shows the ratio of MAE achieved using MT-VDK-GP with VDK-GP. It depicts a two-fold advantage in mean performance by the proposed approach over VDK-GP when the number of training samples is relatively low, lying between 50-250. The valleys establish the data efficiency of the proposed MT-VDK-GP. It can be observed in Fig. \ref{fig:m3} that the blue curve representing VDK-GP eventually coincides with that of the proposed methods. We conclude that with enough data all methods will eventually achieve the same MAE, whereas for the low sample regime (5-250 for the chosen example)  MT-VDK-GP offers superior performance.

\begin{figure*}[t]
 \centering
 \includegraphics[width=\textwidth]{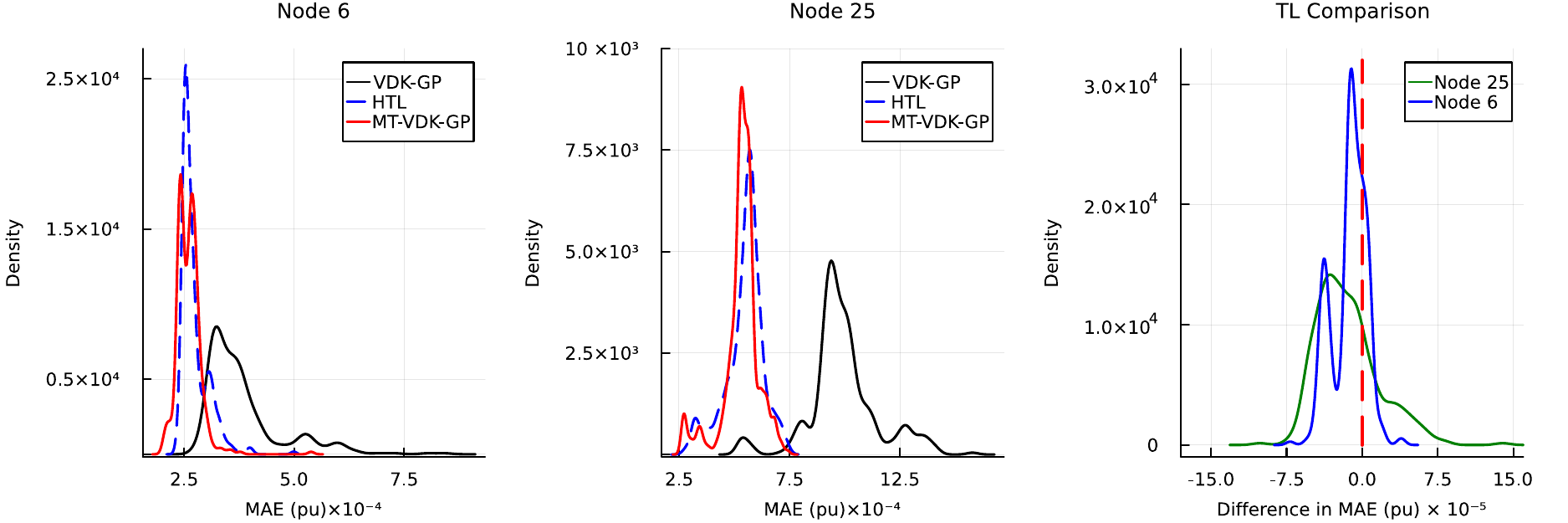}
 \vspace{-1.5em}
 \caption{Densities of MAE (first two sub-figures from left) using proposed transfer learning mechanisms and VDK-GP for 356 different network topologies arising due to simultaneous two branch outages (N-2 contingency). For each voltage, $V_6$ and $V_{25}$, 60 training sample per network topology are used along with 50 training iterations for all three different GP settings and source set for TL has branch 1 and 10 outage data along with base topology. Rightmost sub-figure depicts densities of MAE difference (MAE with MT-VDK-GP - MAE with HTL) for different node voltage functions. Higher density on the left of zero line (more than 75\% in both cases) shows superior MT-VDK-GP performance over HTL.}
 \label{fig:N2}
 \vspace{-1em}
\end{figure*}

 \begin{figure*}[t]
 \centering
 \includegraphics[width=\textwidth]{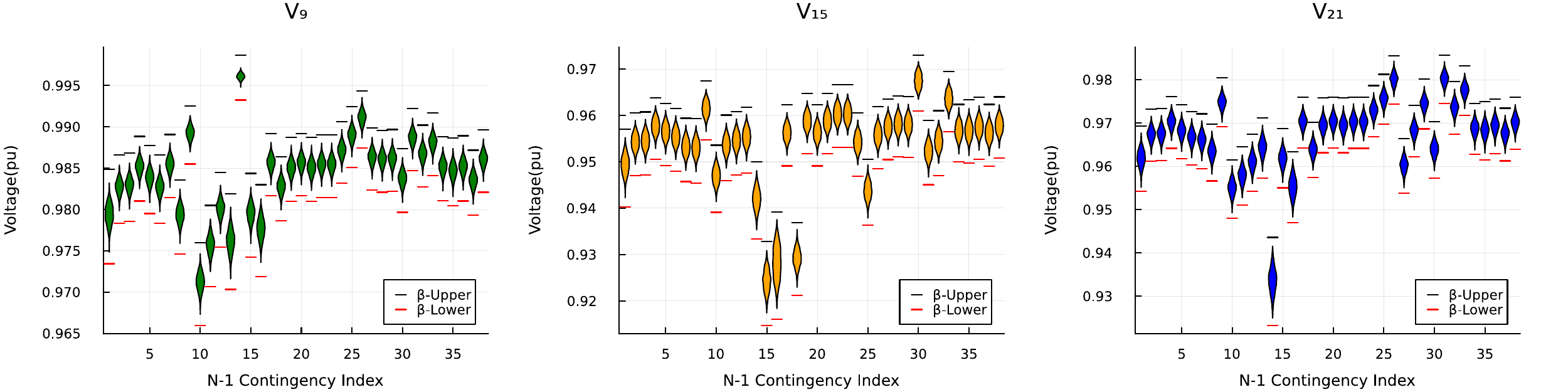}
 \vspace{-1.3em}
 \caption{Probabilistic Voltage Envelopes (PVEs) for three node voltages and 38 different network topologies arising due to N-1 contingencies in 30-Bus network. Violins represent node voltage distribution at each topology for 2000 power injections samples, out of which 60 are used for training, 1000 for $\beta$-Upper and $\beta$-lower calculation while rest are unseen injection samples. Training conditions (60 samples, 50 iterations) are same as in Fig. \ref{fig:allnodeN1} and MT-VDK-GP models are used with source set B.}
 \label{fig:pve}
 \vspace{-1em}
 \end{figure*}

\begin{table}[t]
 \centering
\caption{Area Under Density Curves for Different MAE Cut-Offs}
 \begin{tabular}{c|c|c|c|c|c|c|c|}
 & & \multicolumn{6}{c|}{Source Set} \\
 \cline{3-8}
 MAE (pu) & VDK-GP & \multicolumn{2}{c|}{A} & \multicolumn{2}{c|}{B} & \multicolumn{2}{c|}{C} \\ 
 \cline{3-8}
 $\times 10^{-4}$ & & MT & HTL & MT & HTL & MT & HTL \\
 \hline
 $< 10.0$ & 0.68 & 1.00 & 1.00 & 1.00 & 1.00 & 1.00 & 1.00\\
 $< 5.00$ & 0.00 & 0.69 & 0.60 & 0.71 & 0.60 & 0.74 & 0.59 \\
 $< 2.50$ & 0.00 & 0.15 & 0.05 & 0.17 & 0.04 & 0.18 & 0.05 \\
 \hline
 \\
 \multicolumn{8}{l}{MT := MT-VDK-GP \& Train-Test Settings same as in Fig. \ref{fig:allnodeN1}.}
 \end{tabular}
 \label{tab:area}
 % \vspace{-0.5em}
\end{table}

With different topologies, some node voltages can change significantly. Thus, it is important to measure the performance of TL methods on all node voltages. Fig. \ref{fig:allnodeN1} shows density plots of MAEs obtained for all node voltages (24 different non-generator nodes) for 38 different topologies (N-1 contingencies) in the 30-Bus system. Also, from left, the first three sub-figures are obtained when different topologies are part of the source set, i.e., available for pre-training, while the right-most figure shows the MAE distribution when VDK-GP is trained for each voltage and each topology from the baseline (no transfer learning). We observe that the proposed methods have effectively halved the mean of MAE density, shifting it from $\sim 10\times10^{-4}$ to $\sim 5\times10^{-4}$ when compared to the VDK-GP baseline. Also, the tail of MAE densities is shorter with the proposed method and does not extend much beyond $10\times10^{-4}$, as is the case with VDK-GP. The left-shifting of MT-VDK-GP-associated MAE density (green) compared to HTL MAE density (red dashed) indicates that due to source kernel addition in MT-VDK, the power flow knowledge retention is better than using only hyperparameters in HTL. Table \ref{tab:area} highlights the comparative performance shown in Fig. \ref{fig:allnodeN1} by calculating the area under different density curves for different MAE cut-offs. It validates the observations made above regarding MAE density being below $10\times10^{-4}$ and the majority of voltage-topology instances having error less than $5\times10^{-4}$ with proposed methods. 

An important observation from Table \ref{tab:area} is that HTL method performance is nearly source-set-agnostic. This is because hyperparameters do not contain network topology information directly. Therefore, the proposed MT-VDK-GP has better performance compared to HTL, and performance improvement can be observed with an increase in source-set size going from set A to C in Table \ref{tab:area}. Note that source set elements, along with size, will also impact the performance of MT-VDK-GP. The selection of a source set can be done based on critical contingency (or branches connected to the largest source/load), if the operator has the option of collecting data by choice. An intelligent source set selection method and its impact on the performance of MT-VDK-GP require a separate work and will be part of the future studies in this direction.

\subsection{N-2 Benchmarking}
We analyze the performance of proposed methods for network topologies arising due to simultaneous double branch outage (N-2 contingencies). In Fig. \ref{fig:N2}, first two sub-figures from left shows MAE density curves for total 356 different network structures arising due to double line outage (N-2) contingencies. It is clear that proposed TL methods achieve lesser error for both voltages. Here note that source set does not contain any data on N-2 topologies. Therefore, the transfer capability is shown in Fig. \ref{fig:N2} is for completely out-of-sample network structures. Also, similar to the observations made with Fig. \ref{fig:allnodeN1}, the proposed methods also reduces tail of MAE distributions which implies lower worst case error. The right most sub-figure in Fig. \ref{fig:N2} compares the performance of proposed TL methods with each other by calculating the difference in MAE. As for majority of network topologies the MT-VDK-GP outperforms the HTL, density of difference MAE\footnote{Difference MAE = $\text{MAE}_{\texttt{MT}} - \text{MAE}_{\texttt{HTL}}$} is higher on left hand side of zero line. Among 356 topology instances, MT-VDK-GP has lesser error compared to HTL in 276 (77.52\%) and 268 (75.28\%) instances for node 6 and node 25 voltage respectively. These results shows that providing higher and direct topological information leads to better learning performance for N-2 contingencies, even when only N-1 contingency data is available for source training. 

\subsection{Probabilistic Voltage Envelopes (PVEs)}
The goal of PVEs is to provide limits within which the voltages will remain, with a given probability and confidence. Fig. \ref{fig:pve} shows PVEs for 38 different network topologies, constructed using MT-VDK-GP. We considered $\kappa = 3.75$ in Corollary \ref{cor}, which provides $\gamma=8.8\times10^{-5}$ and $\delta$ is taken as $10^{-4}$. Thus, from Corollary 1 using 1000 samples for $\beta$ estimation, we obtain a violation probability of less than 1.9\% with more than 90\% confidence. Moreover, from Fig. \ref{fig:pve}, it is clear that the proposed method has been able to get upper ($\beta$-Upper) and lower ($\beta$-Lower) estimates of the maximum voltage, which bounds all 2000 true voltage solutions, in all network structure cases. Also, it takes only $3.3\pm 0.6$ sec. to develop PVEs for one node voltage on a given contingency along with the time to generate 60 samples. The total number of power flow solutions required will be $60\times38$ using the proposed method, which is more than sixteen times less than $1000\times38$ samples required using brute force MCS. Further, by increasing training samples for target functions in MT-VDK-GP beyond 60, we can tighten the bounds shown in Fig. \ref{fig:pve}. Moreover, with a change in injection distribution, the MT-VDK-GP model can be used without re-training (only evaluation) while the brute force MCS method will require resolving 1000 power flows again. Table \ref{tab:data} shows the data-efficiency of proposed MT-VDK-GP method, compared to both MCS and VDK-GP methods.

\begin{table}[t]
    \centering
    \caption{Data-Efficiency of Proposed Method in Terms of Power Flow Sample Requirement}
    \begin{tabular}{c|ccc}
 Method  & MCS & VDK-GP & MT-VDK-GP  \\
\hline
   N-1 &  $38000$ & $3900$ &$2380$ \\
   N-2 & $356000$ & $35700$ & $21460$ \\
         \hline
    \end{tabular}
    \label{tab:data}
\end{table}

\section{Conclusions}
In summary, this work presents data-efficient power flow learning techniques designed for network contingencies. Leveraging the vertex-degree kernel (VDK) for Gaussian process (GP) learning, we harness existing power flow data and models to effectively learn voltage-load functions in previously unencountered network configurations. Our proposed multi-task vertex degree kernel (MT-VDK) combines historical VDKs with parameterized VDKs for unseen networks, offering a computationally efficient alternative to multi-task GP (MTGP) methods. Additionally, we introduce a simple hyperparameter transfer learning (HTL) approach for employing historically-trained power flow models when encountering previously unencountered network contingencies. Using MT-VDK-GP models for voltage-power functions, we have developed a sample-efficient mechanism for probabilistic voltage envelopes (PVEs). Simulation results on the IEEE 30-Bus network demonstrate the ability of historically-trained models to retain and transfer power flow insights in N-1 and N-2 contingency scenarios. Future directions for this research could include extending the application of MT-VDK-GP to larger power networks and exploring its potential in power dispatch adjustment after contingency realization under load uncertainty.

\bibliographystyle{IEEEtran}
\bibliography{main}

% Generated by IEEEtran.bst, version: 1.14 (2015/08/26)
\begin{thebibliography}{10}
\providecommand{\url}[1]{#1}
\csname url@samestyle\endcsname
\providecommand{\newblock}{\relax}
\providecommand{\bibinfo}[2]{#2}
\providecommand{\BIBentrySTDinterwordspacing}{\spaceskip=0pt\relax}
\providecommand{\BIBentryALTinterwordstretchfactor}{4}
\providecommand{\BIBentryALTinterwordspacing}{\spaceskip=\fontdimen2\font plus
\BIBentryALTinterwordstretchfactor\fontdimen3\font minus
  \fontdimen4\font\relax}
\providecommand{\BIBforeignlanguage}[2]{{%
\expandafter\ifx\csname l@#1\endcsname\relax
\typeout{** WARNING: IEEEtran.bst: No hyphenation pattern has been}%
\typeout{** loaded for the language `#1'. Using the pattern for}%
\typeout{** the default language instead.}%
\else
\language=\csname l@#1\endcsname
\fi
#2}}
\providecommand{\BIBdecl}{\relax}
\BIBdecl

\bibitem{molzahnsurvey}
D.~K. Molzahn and I.~A. Hiskens, ``A survey of relaxations and approximations
  of the power flow equations,'' \emph{Foundations and Trends{\textregistered}
  in Electric Energy Systems}, vol.~4, no. 1-2, pp. 1--221, 2019.

\bibitem{jia2023tutorial}
M.~Jia \emph{et~al.}, ``Tutorial on data-driven power flow linearization-part
  i: Challenges and training algorithms,'' \emph{Submitted}, 2023.

\bibitem{misra2018optimal}
S.~Misra, D.~K. Molzahn, and K.~Dvijotham, ``Optimal adaptive linearizations of
  the ac power flow equations,'' in \emph{2018 Power Systems Computation
  Conference (PSCC)}.\hskip 1em plus 0.5em minus 0.4em\relax IEEE, 2018, pp.
  1--7.

\bibitem{deka2017structure}
D.~Deka, S.~Backhaus, and M.~Chertkov, ``Structure learning in power
  distribution networks,'' \emph{IEEE Trans. on Control of Network Systems},
  vol.~5, no.~3, pp. 1061--1074, 2017.

\bibitem{9537673}
L.~Guo \emph{et~al.}, ``Data-driven power flow calculation method: A lifting
  dimension linear regression approach,'' \emph{IEEE Trans. on Power Systems},
  vol.~37, no.~3, pp. 1798--1808, 2022.

\bibitem{line2013}
L.~Roald, F.~Oldewurtel, T.~Krause, and G.~Andersson, ``Analytical
  reformulation of security constrained optimal power flow with probabilistic
  constraints,'' in \emph{2013 IEEE Grenoble Conference}, 2013, pp. 1--6.

\bibitem{singh2023}
V.~Singh, T.~Moger, and D.~Jena, ``Probabilistic load flow for wind integrated
  power system considering node power uncertainties and random branch
  outages,'' \emph{IEEE Transactions on Sustainable Energy}, vol.~14, no.~1,
  pp. 482--489, 2023.

\bibitem{zou2014}
B.~Zou and Q.~Xiao, ``Solving probabilistic optimal power flow problem using
  quasi monte carlo method and ninth-order polynomial normal transformation,''
  \emph{IEEE Transactions on Power Systems}, vol.~29, no.~1, pp. 300--306,
  2014.

\bibitem{chen2022review}
C.~Chen \emph{et~al.}, ``Review of restoration technology for
  renewable-dominated electric power systems,'' \emph{Energy Conversion and
  Economics}, vol.~3, no.~5, pp. 287--303, 2022.

\bibitem{Go}
M.~Gao, J.~Yu, Z.~Yang, and J.~Zhao, ``Physics embedded graph convolution
  neural network for power flow calculation considering uncertain injections
  and topology,'' \emph{IEEE Trans. on Neural Networks and Learning Systems},
  pp. 1--12, 2023.

\bibitem{G1}
Y.~Jia \emph{et~al.}, ``Jia, yujing and bai, xiaoqing and zheng, liqin and
  weng, zonglong and li, yunyi,'' \emph{IEEE Trans. on Power Systems}, vol.~38,
  no.~1, pp. 853--860, 2023.

\bibitem{G2}
M.~Xiang \emph{et~al.}, ``Probabilistic power flow with topology changes based
  on deep neural network,'' \emph{International Journal of Electrical Power \&
  Energy Systems}, vol. 117, p. 105650, 2020.

\bibitem{chen2022meta}
Y.~Chen \emph{et~al.}, ``A meta-learning approach to the optimal power flow
  problem under topology reconfigurations,'' \emph{IEEE Open Access Journal of
  Power and Energy}, vol.~9, pp. 109--120, 2022.

\bibitem{owerko2020optimal}
D.~Owerko \emph{et~al.}, ``Optimal power flow using graph neural networks,'' in
  \emph{2020 IEEE International Conference on Acoustics, Speech and Signal
  Processing (ICASSP)}.\hskip 1em plus 0.5em minus 0.4em\relax IEEE, 2020, pp.
  5930--5934.

\bibitem{G3}
Y.~Du \emph{et~al.}, ``Achieving 100x acceleration for n-1 contingency
  screening with uncertain scenarios using deep convolutional neural network,''
  \emph{IEEE Trans. on Power Systems}, vol.~34, no.~4, pp. 3303--3305, 2019.

\bibitem{G4}
R.~R. Hossain \emph{et~al.}, ``Graph convolutional network-based topology
  embedded deep reinforcement learning for voltage stability control,''
  \emph{IEEE Trans. on Power Systems}, vol.~36, no.~5, pp. 4848--4851, 2021.

\bibitem{donon2019graph}
B.~Donon, B.~Donnot, I.~Guyon, and A.~Marot, ``Graph neural solver for power
  systems,'' in \emph{2019 international joint conference on neural networks
  (ijcnn)}.\hskip 1em plus 0.5em minus 0.4em\relax IEEE, 2019, pp. 1--8.

\bibitem{Wang2020}
D.~Wang \emph{et~al.}, ``Probabilistic power flow solution with graph
  convolutional network,'' in \emph{2020 ISGT-Europe}, 2020, pp. 650--654.

\bibitem{gao2023physics}
M.~Gao \emph{et~al.}, ``A physics-guided graph convolution neural network for
  optimal power flow,'' \emph{IEEE Trans. on Power Systems}, 2023.

\bibitem{pareek2023graph}
P.~Pareek, D.~Deka, and S.~Misra, ``Graph-structured kernel design for power
  flow learning using gaussian processes,'' \emph{arXiv preprint
  arXiv:2308.07867}, 2023.

\bibitem{tighineanu2022transfer}
P.~Tighineanu, K.~Skubch, P.~Baireuther, A.~Reiss, F.~Berkenkamp, and
  J.~Vinogradska, ``Transfer learning with gaussian processes for bayesian
  optimization,'' in \emph{International Conference on Artificial Intelligence
  and Statistics}.\hskip 1em plus 0.5em minus 0.4em\relax PMLR, 2022, pp.
  6152--6181.

\bibitem{pareek2021framework}
P.~Pareek and H.~D. Nguyen, ``A framework for analytical power flow solution
  using gaussian process learning,'' \emph{IEEE Trans. on Sustainable Energy},
  vol.~13, no.~1, pp. 452--463, 2021.

\bibitem{williams2006gaussian}
C.~K. Williams and C.~E. Rasmussen, \emph{Gaussian processes for machine
  learning}.\hskip 1em plus 0.5em minus 0.4em\relax MIT press Cambridge, MA,
  2006, vol.~2, no.~3.

\bibitem{du2019achieving}
Y.~u, F.~Li, J.~Li, and T.~Zheng, ``Achieving 100x acceleration for n-1
  contingency screening with uncertain scenarios using deep convolutional
  neural network,'' \emph{IEEE Trans. on Power Systems}, vol.~34, no.~4, pp.
  3303--3305, 2019.

\bibitem{alvarez2012kernels}
M.~A. Alvarez, L.~Rosasco, N.~D. Lawrence \emph{et~al.}, ``Kernels for
  vector-valued functions: A review,'' \emph{Foundations and
  Trends{\textregistered} in Machine Learning}, vol.~4, no.~3, pp. 195--266,
  2012.

\bibitem{liu2022using}
M.~Z. Liu, L.~F. Ochoa, P.~K. Wong, and J.~Theunissen, ``Using opf-based
  operating envelopes to facilitate residential der services,'' \emph{IEEE
  Trans. on Smart Grid}, vol.~13, no.~6, pp. 4494--4504, 2022.

\bibitem{486117}
G.~Ejebe, G.~Irisarri, S.~Mokhtari, O.~Obadina, P.~Ristanovic, and J.~Tong,
  ``Methods for contingency screening and ranking for voltage stability
  analysis of power systems,'' \emph{IEEE Transactions on Power Systems},
  vol.~11, no.~1, pp. 350--356, 1996.

\bibitem{xu2012optimization}
H.~Xu, C.~Caramanis, and S.~Mannor, ``Optimization under probabilistic envelope
  constraints,'' \emph{Operations Research}, vol.~60, no.~3, pp. 682--699,
  2012.

\bibitem{alamo2010sample}
T.~Alamo \emph{et~al.}, ``On the sample complexity of probabilistic analysis
  and design methods,'' in \emph{Perspectives in Mathematical System Theory,
  Control, and Signal Processing}.\hskip 1em plus 0.5em minus 0.4em\relax
  Springer, 2010, pp. 39--50.

\bibitem{tempo1996bounds}
R.~Tempo, E.~Bai, and F.~Dabbene, ``Probabilistic robustness analysis: explicit
  bounds for the minimum number of samples,'' in \emph{Proceedings of 35th IEEE
  Conference on Decision and Control}, vol.~3, 1996, pp. 3424--3428 vol.3.

\bibitem{pglib}
S.~Babaeinejadsarookolaee \emph{et~al.}, ``The power grid library for
  benchmarking ac optimal power flow algorithms,'' \emph{arXiv preprint
  arXiv:1908.02788}, 2019.

\bibitem{pareek2020gaussian}
P.~Pareek and H.~D. Nguyen, ``Gaussian process learning-based probabilistic
  optimal power flow,'' \emph{IEEE Trans. on Power Systems}, vol.~36, no.~1,
  pp. 541--544, 2020.

\end{thebibliography}

\end{document}